\documentclass[letterpaper]{article} 
\usepackage{aaai23}  
\usepackage{times}  
\usepackage{helvet}  
\usepackage{courier}  
\usepackage[hyphens]{url}  
\usepackage{graphicx} 
\urlstyle{rm} 
\usepackage{natbib}  
\usepackage{caption} 
\frenchspacing  
\setlength{\pdfpagewidth}{8.5in}  
\setlength{\pdfpageheight}{11in}  
%
\usepackage{algorithm}
\usepackage{algorithmic}
\usepackage[toc,page]{appendix}
%
\usepackage{newfloat}
\usepackage{listings}
\DeclareCaptionStyle{ruled}{labelfont=normalfont,labelsep=colon,strut=off} 
\lstset{%
	basicstyle={\footnotesize\ttfamily},
	numbers=left,numberstyle=\footnotesize,xleftmargin=2em,
	aboveskip=0pt,belowskip=0pt,%
	showstringspaces=false,tabsize=2,breaklines=true}
\floatstyle{ruled}
\newfloat{listing}{tb}{lst}{}
\floatname{listing}{Listing}
%
\pdfinfo{
/TemplateVersion (2023.1)
}


\usepackage{amsfonts}
\usepackage{footmisc}




\usepackage[utf8]{inputenc}
\usepackage{xcolor}
\usepackage{booktabs}
\usepackage{amsthm}
\usepackage{listings}
\usepackage{inconsolata}
\usepackage{tikz}
\usepackage{pgfplots}
\usepackage{amsmath}
\usepackage{microtype}
%
\interfootnotelinepenalty=10000 

\usepackage{subcaption}
\definecolor{pixel 0}{HTML}{FFFFFF}
\definecolor{pixel 1}{HTML}{FF0000} 

\newcommand{\name}{\textsc{Disco}}
\newcommand{\popper}{\textsc{Popper}}
\newcommand{\metagol}{\textsc{Metagol}}
\newcommand{\ilasp}{\textsc{ILASP3}}
\newcommand{\ale}{\textsc{Aleph}}

\lstset{
    basicstyle=\ttfamily\tiny,
    columns=flexible
}

\theoremstyle{definition}
\newtheorem{definition}{Definition}

\newtheorem{proposition}{Proposition}
\newtheorem{lemma}{Lemma}

\lstnewenvironment{myalgorithm}[1][] 
{
    \lstset{ 
        showspaces=false,               
        showstringspaces=false,         
        mathescape=true,
        numbers=left,
        escapeinside={*}{*},
        columns=flexible,
        numbersep=2pt,        
        keywordstyle=\bfseries,
        keywords={,and, return, not, def, in, if, else, for, foreach, while, }
        numbers=left,
        xleftmargin=.0\textwidth,
        #1 
    }
}
{}

\setcounter{secnumdepth}{2} 

%



\title{Learning Logic Programs By Discovering Where Not to Search}
\author {
    Andrew Cropper and Céline Hocquette
}
\affiliations {    
    University of Oxford\\
 andrew.cropper@cs.ox.ac.uk, celine.hocquette@cs.ox.ac.uk
}

\begin{document}

\maketitle

\begin{abstract}
    The goal of inductive logic programming (ILP) is to search for a hypothesis that generalises training examples and background knowledge (BK).
    To improve performance, we introduce an approach that, before searching for a hypothesis, first discovers \emph{where not to search}.
    We use given BK to discover constraints on hypotheses, such as that a number cannot be both even and odd.
    We use the constraints to bootstrap a constraint-driven ILP system.
    Our experiments on multiple domains (including program synthesis and game playing) show that our approach can (i) substantially reduce learning times by up to 97\%, and (ii) scale to domains with millions of facts.
\end{abstract}
\section{Introduction}

The goal of inductive logic programming (ILP) \cite{mugg:ilp} is to search for a hypothesis (a set of rules) that generalises training examples and background knowledge (BK), where hypotheses, examples, and BK are all logic programs.

To illustrate ILP, consider learning list transformation rules with an arbitrary head literal \emph{h}.
Assume we can build rules using the unary relations \emph{odd} and \emph{even} and the binary relations \emph{head} and \emph{tail}.
Then the rule space (the set of all possible rules) contains rules such as:

\begin{center}
\begin{tabular}{l}
\emph{r$_1$  = h $\leftarrow$ tail(A,A)}\\
\emph{r$_2$  = h $\leftarrow$ tail(A,B), tail(B,A)}\\
\emph{r$_3$  = h $\leftarrow$ tail(A,B), tail(B,C), tail(A,C)}\\
\emph{r$_4$  = h $\leftarrow$ tail(A,A), head(A,B), odd(B)}\\
\emph{r$_5$  = h $\leftarrow$ head(A,B), odd(B), even(B)}
\end{tabular}
\end{center}

\noindent
The hypothesis space (the set of all hypotheses) is the powerset of the rule space, so can be enormous.

To improve performance, users can impose an inductive bias \cite{tm:book} to restrict the hypothesis space\footnote{
All machine learning approaches need an inductive bias, i.e. bias-free learning is futile \cite{tm:book}.
}.
For instance, if told that \emph{tail} is irreflexive, some systems \cite{ilasp} will remove rules with the literal \emph{tail(A,A)} from the rule space, such as $r_1$ and $r_4$.
As removing a rule removes all hypotheses that contain it, a strong bias can greatly reduce the hypothesis space.

The main limitation with existing approaches is that they need a human to provide a strong bias, e.g. they need to be told that some relations are irreflexive.
Furthermore, existing bias approaches, such as mode declarations \cite{progol}, cannot describe many useful properties, such as antitransitivity and functional dependencies \cite{DBLP:journals/dke/MannilaR94}.
In general, developing automatic bias discovery approaches is a grand challenge in the field \cite{ilpintro}.

To overcome this limitation, we introduce an automated bias discovery approach. 
The key idea is to use given BK to discover how to restrict the hypothesis space \emph{before} searching for a solution\footnote{
A \emph{solution} is a hypothesis that generalises the examples.
An \emph{optimal} solution is the smallest solution in terms of its size in literals.
}.
For instance, consider the previous list transformation example.
Assume we have BK with only the facts:

\begin{center}
\begin{tabular}{lll}
\emph{head(ijcai,i)} & \emph{tail(ijcai,jcai)} & \emph{even(2)}\\
\emph{head(ecai,e)} & \emph{tail(ecai,cai)} & \emph{even(4)}\\
\emph{head(cai,c)} & \emph{tail(jcai,cai)} & \emph{odd(1)}\\
\emph{tail(ai,i)} & \emph{tail(cai,ai)} & \emph{odd(3)}\\
\end{tabular}
\end{center}

\noindent
Given this BK, if we adopt a \emph{closed world assumption} \cite{cwa} we can deduce that some rules will be unsatisfiable \emph{regardless of the concept we want to learn}, i.e. regardless of specific training examples.
For instance, as there is no fact of the form \emph{tail(A,A)}, we can deduce that \emph{tail} is irreflexive, and thus remove $r_1$ and $r_4$ from the rule space as their bodies are unsatisfiable\footnote{
These properties may only hold with the given BK.
However, as the ILP problem is defined in terms of the given BK, our approach is optimally sound (Proposition \ref{prop:sound}).
}.
Similarly, we can deduce that \emph{tail} is asymmetric and antitransitive and that \emph{odd} and \emph{even} are mutually exclusive and thus remove rules $r_2$, $r_3$, and $r_5$.
With this bias discovery approach, we have substantially reduced the hypothesis space before searching for a solution, i.e. we have discovered \emph{where not to search}.

Our bias discovery approach works in two stages.
First, we use the given BK to discover functional dependencies and relational properties, such as irreflexivity, asymmetry, and antitransitivity.
To do so, we use a bottom-up approach \cite{savnik1993bottom} implemented in answer set programming (ASP) \cite{asp}.
Second, we use the properties to build constraints to restrict the hypothesis space.
For instance, if we discover that \emph{even} and \emph{odd} are mutually exclusive, we build constraints to prohibit rules with both the body literals \emph{odd(A)} and \emph{even(A)}.
We use these constraints to bootstrap a constraint-driven ILP system \cite{popper}. \
The constraints remove non-optimal hypotheses from the hypothesis space so that the system never considers them when searching for a solution.

\paragraph{Novelty, Impact, and Contributions.}
The novelty of this paper is the idea of \emph{automatically deducing constraints about the hypothesis space before searching the hypothesis space}.
As far as we are aware, this idea has not been explored before.
The impact is vastly improved learning performance, demonstrated on a diverse set of tasks and domains.
For instance, our approach can reduce learning times by up to 97\%.
Moreover, as the idea connects many AI fields, including  program synthesis, constraint programming, and knowledge representation, there is much potential for broad research to build on this idea.

Overall, we make the following contributions:

\begin{itemize}
\item We introduce the constraint discovery problem and define \emph{optimally sound} constraints.

\item We describe a bias discovery approach that automatically discovers functional dependencies and relational properties, such as asymmetry and antitransitivity.
We prove that our approach is optimally sound.

\item We implement our approach in ASP and use it to bootstrap a constraint-driven ILP system. 

\item We experimentally show on multiple domains that our approach can (i) substantially reduce learning times by up to 97\%, and (ii) scale to BK with millions of facts.
\end{itemize}
\section{Related Work}

\textbf{Program synthesis.}
The goal of program synthesis is to automatically generate computer programs from examples.
This topic, which \citet{gulwani2017program} consider the holy grail of AI, interests a broad community \cite{dilp,ellis:scc}.
Although our bias discovery idea could be applied to any form of program synthesis, we focus on ILP because it induces human-readable relational programs, often from small numbers of training examples \cite{ilpintro}.
Moreover, the logical representation naturally supports declarative knowledge in the form of logical constraints.

\textbf{ILP.}
Many systems allow a human to manually specify conditions for when a rule cannot be in a hypothesis \cite{progol,aleph,ilasp}.
Most systems only reason about the conditions \emph{after} constructing a hypothesis, such as Aleph's rule pruning mechanism.
By contrast, we automatically discover constraints and remove rules that violate them from the hypothesis space \emph{before} searching for a hypothesis.

\textbf{Constraints.}
Many systems use constraints to restrict the hypothesis space \cite{aspal,inoue:mla,atom,hexmil,popper}.
For instance, the Apperception \cite{apperception} engine has several built-in constraints, such as a \emph{unity condition}, which requires that objects are connected via chains of binary relations.
By contrast, we automatically discover constraints before searching for a hypothesis.

\textbf{Bottom clauses.}
Many systems use mode declarations to build bottom clauses \cite{progol} to bound the hypothesis space.
Bottom clauses can be seen as informing an ILP system where to search.
Our approach is similar, as it restricts the hypothesis space.
However, bottom clauses are example specific. To find a rule to cover an example, a learner constructs the bottom clause for that specific example, which it uses to bias the search.
By contrast, our bias discovery approach is task independent and only uses the BK, not the training examples.
Because of this difference, we can reuse any discovered bias across examples and tasks.
For instance, if we discover that the successor relation (\emph{succ}) is asymmetric, we can reuse this bias across multiple tasks.
In addition, because of our two-stage approach, we can amortise the cost of discovering BK constraints across tasks.

\textbf{Bias discovery.}
\citet{modelearning} automatically deduce mode declarations from the BK, such as types and whether arguments should be ground.
Our approach is different because, as we use constraints, we can reason about properties that modes cannot, such as antitransitivity, functional dependencies, and mutual exclusivity. \citet{bridewell2007} learn structural constraints over the hypothesis space in a multi-task setting. 
By contrast, we discover biases before solving any task.

\textbf{Constraint induction.}
Inducing constraints is popular in AI \cite{DBLP:conf/aaai/RaedtPT18}.
In ILP, inducing constraints has been widely studied, notably by clausal discovery approaches \cite{claudien}.
These approaches induce constraints to include in a hypothesis to eliminate models.
By contrast, we do not include constraints in hypotheses.
Instead, we discover constraints to prune the hypothesis space.

\textbf{Preprocessing.}
Our discovery approach is a form of \emph{preprocessing}, which has been widely studied in AI, notably to reduce the size of a SAT instance \cite{DBLP:conf/sat/EenB05}.
Other preprocessing approaches in ILP focus on reducing the size of BK \cite{alps} or predicate invention \cite{celine:bottom}.
By contrast, we discover constraints in the BK to prune the hypothesis space.

\textbf{Other work.}
Our approach is related to automated constraint generation in constraint programming \cite{colton_constraints}, finding unsatisfiable cores in SAT \cite{unsatcores}, and condensed representations in frequent pattern mining \cite{DBLP:conf/kr/RaedtR04}.



\section{Problem Setting}
\label{sec:setting}
We formulate our approach in the ILP learning from entailment setting \cite{luc:book}.
We assume familiarity with logic programming \cite{lloyd:book} and ASP \cite{asp}.
The only clarification is that by \emph{constraint} we mean a Horn clause without a positive literal. 

\subsection{ILP Problem}
We define an ILP input.
We restrict hypotheses and BK to definite programs.

\begin{definition}[\textbf{ILP input}]
\label{def:probin}
An ILP input is a tuple $(E^+, E^-, B, \mathcal{H})$ where $E^+$ and $E^-$ are sets of facts denoting positive and negative examples respectively, $B$ is BK, and $\mathcal{H}$ is a hypothesis space, i.e a set of possible hypotheses.
\end{definition}


\noindent
We define an ILP solution:

\begin{definition}[\textbf{ILP solution}]
\label{def:solution}
Given an ILP input $(E^+, E^-, B, \mathcal{H})$, a hypothesis $H \in \mathcal{H}$ is a \emph{solution} when it is \emph{complete} ($\forall e \in E^+, \; B \cup H \models e$) and \emph{consistent} ($\forall e \in E^-, \; B \cup H \not\models e$).
\end{definition}

\noindent
Let \emph{cost} : $\mathcal{H} \mapsto \mathbb{R}$ be an arbitrary function that measures the cost of a hypothesis.
We define an \emph{optimal} solution:

\begin{definition}[\textbf{Optimal solution}]
\label{def:opthyp}
Given an ILP input $(E^+, E^-, B, \mathcal{H})$, a hypothesis $H \in \mathcal{H}$ is \emph{optimal} when (i) $H$ is a solution, and (ii) $\forall H' \in \mathcal{H}$, where $H'$ is a solution, \emph{cost}($H$) $\leq$ \emph{cost}($H'$).
\end{definition}

\noindent
In this paper, our cost function is the number of literals in a hypothesis.
In addition, we use the notion of a \emph{task} to refer to the problem of finding an optimal solution for an ILP input.


\subsection{Constraint Discovery Problem}
We denote a set of possible constraints as $\mathcal{C}$.
A hypothesis $H \in \mathcal{H}$ is \emph{consistent} with $C \subseteq \mathcal{C}$ if it does not violate any constraint in $C$.
We denote the subset of $\mathcal{H}$ consistent with $C$ as $\mathcal{H}_{C}$.
We define the \emph{constraint discovery} input:

\begin{definition}[\textbf{Constraint discovery input}]
A constraint discovery input is a tuple $(E^+, E^-, B, \mathcal{H}, \mathcal{C})$ where  $(E^+, E^-, B, \mathcal{H})$ is an ILP input and  $\mathcal{C}$ is a set of possible constraints.
\end{definition}

\noindent
We define the \emph{constraint discovery} problem:

\begin{definition}[\textbf{Constraint discovery problem}]
\label{def:preprob}
Given a constraint discovery input $(E^+, E^-, B, \mathcal{H}, \mathcal{C})$, the \emph{constraint discovery} problem is to find $C \subseteq \mathcal{C}$ such that $|\mathcal{H}_{C}| < |\mathcal{H}|$.
\end{definition}

\noindent
One might assume we want to discover \emph{sound} constraints:
\begin{definition}[\textbf{Sound constraints}]
Let $I = (E^+, E^-, B, \mathcal{H}, \mathcal{C})$ be a constraint discovery input.
Then $C \subseteq \mathcal{C}$ is \emph{sound} if and only if $\forall H \in \mathcal{H}$ if $H$ is a solution for $I$ then $H \in \mathcal{H}_{C}$.
\end{definition}

\noindent
However, we often want to eliminate non-optimal solutions from the hypothesis space. 
For instance, consider learning to recognise lists with a single element and the hypothesis:
\begin{center}
\begin{tabular}{l}
\emph{f(A) $\leftarrow$ length(A,B), one(B), two(B)}\\
\emph{f(A) $\leftarrow$ length(A,B), one(B)}
\end{tabular}
\end{center}
This hypothesis is a solution but is not optimal.
We would prefer to learn an optimal solution, such as:
\begin{center}
\begin{tabular}{l}
\emph{f(A) $\leftarrow$ length(A,B), one(B)}
\end{tabular}
\end{center}

\noindent
We, therefore, define \emph{optimally sound} constraints:

\begin{definition}[\textbf{Optimally sound constraints}]
Let $I = (E^+, E^-, B, \mathcal{H}, \mathcal{C})$ be a constraint discovery input. Then $C \subseteq \mathcal{C}$ is \emph{optimally sound} if and only if $\forall H \in \mathcal{H}$ if $H$ is an optimal solution for $I$ then $H \in \mathcal{H}_{C}$.
\end{definition}
\noindent
In the next section we present an approach that discovers optimally sound constraints using the BK.

\section{BK Constraint Discovery}
\label{sec:impl}

Our approach works in two stages. 
First, we use BK to identify relational properties and functional dependencies.
Second, we use the properties to build constraints on hypotheses to bootstrap an ILP system.

\subsection{Properties}
\label{sec:cons}
\begin{table*}
\centering
\begin{tabular}{@{}llll@{}}
\textbf{Name} & \textbf{Property} & \textbf{Constraint} & \textbf{Example}\\
\midrule

Irreflexive & 
\emph{$\neg$p(A,A)} & 
\emph{$\leftarrow$ p(A,A)} & 
\emph{$\leftarrow$ brother(A,A)}
\\
Antitransitive & 
\emph{p(A,B), p(B,C) $\rightarrow$ $\neg$p(A,C)} & 
\emph{$\leftarrow$ p(A,B), p(B,C), p(A,C)} & 
\emph{$\leftarrow$ succ(A,B), succ(B,C), succ(A,C)}
\\
Antitriangular & 
\emph{p(A,B), p(B,C) $\rightarrow$ $\neg$p(C,A)} & 
\emph{$\leftarrow$ p(A,B), p(B,C), p(C,A)}&
\emph{$\leftarrow$ tail(A,B), tail(B,C), tail(C,A)}
\\
Injective & 
\emph{p(A,B), p(C,B) $\rightarrow$ A=C} & 
\emph{$\leftarrow$ p(A,B), p(C,B), A$\neq$C} & 
\emph{$\leftarrow$ succ(A,B), succ(C,B), A$\neq$C}
\\
Functional & 
\emph{p(A,B), p(A,C) $\rightarrow$ B=C} & 
\emph{$\leftarrow$ p(A,B), p(A,C), B$\neq$C} &
\emph{$\leftarrow$ length(A,B), length(A,C), B$\neq$C}
\\
Asymmetric & 
\emph{p(A,B) $\rightarrow$ $\neg$p(B,A)} & 
\emph{$\leftarrow$ p(A,B), p(B,A)} &
\emph{$\leftarrow$ mother(A,B), mother(B,A)}
\\
Exclusive & 
\emph{p(A) $\rightarrow$ $\neg$q(A)} &
\emph{$\leftarrow$ p(A), q(A)} &
\emph{$\leftarrow$ odd(A), even(A)}
\end{tabular}
\caption{
Properties and constraints. 
We generalise the properties, except antitransitive and antitriangular, to higher arities.
The relation \emph{succ/2} is the successor relation for natural numbers, such as \emph{succ(1,2)}, \emph{succ(2,3)}, \emph{succ(3,4)}, etc.
}
\label{tab:props}
\end{table*}

Table \ref{tab:props} shows the properties we consider.
We generalise the properties, except antitransitive and antitriangular, to higher arities.
For instance, if a ternary relation \emph{p} is in the BK, we consider a ternary irreflexive constraint \emph{$\leftarrow$ p(A,A,A)}.
Similarly, we also identify higher-arity functional dependencies \cite{DBLP:journals/dke/MannilaR94}.
For instance, for the relation \emph{append(Head,Tail,List)} we can determine that the third argument is functionally dependent on the first two.
The appendix describes the properties we consider and their generalisations to arity 3.

\subsubsection{Property Identification}
\label{sec:discovery}
Rather than requiring a user to specify which properties in Table \ref{tab:props} hold for BK relations, we automatically discover this information.
There are many efficient algorithms for discovering data dependencies \cite{DBLP:journals/pvldb/PapenbrockEMNRZ15}.
However, as far as we are aware, no single algorithm can capture all the properties in Table \ref{tab:props}.
We, therefore, implement a bottom-up approach \cite{savnik1993bottom} in ASP.
The idea is to try to find a counter-example for each property.
For instance, for a binary relation \emph{p} to be irreflexive there cannot be a counter-example \emph{p(a,a)}.
To implement this idea, we \emph{encapsulate} all the relations in the BK, restricted to a user-specified set that may appear in a hypothesis. 
Specifically, for each relation \emph{p} with arity $a$ we add this rule to the BK:

\begin{center}
\begin{tabular}{l}
\emph{holds}$(p,(X_1,X_2,\dots,X_a)) \leftarrow p(X_1,X_2,\dots,X_a)$ 
\end{tabular}
\end{center}
\noindent
We then deduce properties with ASP programs.
For instance, we deduce asymmetry for binary relations by finding an answer set (a stable model) of the program:

\begin{center}
\begin{tabular}{l}
\emph{asymmetric(P) $\leftarrow$ holds(P,(\_,\_)), not non\_asymmetric(P)}\\
\emph{non\_asymmetric(P) $\leftarrow$ holds(P,(A,B)), holds(P,(B,A))}
\end{tabular}
\end{center}




\noindent
Likewise, we deduce that two relations $P$ and $Q$ are mutually exclusive with the program:

\begin{center}
\begin{tabular}{l}
\emph{exclusive(P,Q) $\leftarrow$ holds(P,\_), holds(Q,\_), not both\_hold(P,Q)}\\
\emph{both\_hold(P,Q) $\leftarrow$ holds(P,Args), holds(Q,Args)}
\end{tabular}
\end{center}

\noindent
We deduce that a binary relation $P$ is functional with the program:

\begin{center}
\begin{tabular}{l}
\emph{functional(P) $\leftarrow$ holds(P,(\_,\_)), not non\_functional(P)}\\
\emph{non\_functional(P) $\leftarrow$ holds(P,(A,B)), holds(P,(A,C)), B!=C}
\end{tabular}
\end{center}

\noindent
The appendix includes all the ASP programs we consider.



\subsection{Constraints}

The output of stage one is a set of properties that hold for background relations. If a property holds for a relation, we generate the corresponding constraint to prohibit hypotheses that violate the constraint.
Although these constraints can potentially be used by any ILP system, we implement our approach to work with \popper{} \cite{popper}.
\popper{} is a natural choice because it frames the ILP problem as a constraint satisfaction problem.
Moreover, it learns recursive programs, supports predicate invention, and is open-source\footnote{ILASP \cite{ilasp} is an alternative system but is closed-source and thus difficult to adapt.
HEXMIL \cite{hexmil} is also an alternative system but requires metarules (program templates) as input and is thus largely restricted to dyadic logic. \popper{}, by contrast, does not need metarules.
}.
We describe \popper{} and our modification named \name{}.

\subsubsection{\popper{}}
\popper{} takes as input BK, training examples, and a maximum hypothesis size and learns hypotheses as definite programs.
\popper{} starts with an ASP program $\mathcal{P}$ which can be viewed as a \emph{generator} program because each model (answer set) of $\mathcal{P}$ represents a hypothesis.
\popper{} uses a meta-language formed of head (\emph{h\_lit/3}) and body (\emph{b\_lit/3}) literals to represent hypotheses.
The first argument of each literal is the rule id, the second is the predicate symbol, and the third is the literal variables, where \emph{0} represents \emph{A}, \emph{1} represents \emph{B}, etc.
For instance, \popper{} represents the rule \emph{last(A,B) $\leftarrow$ tail(A,C), head(C,B)} as the set \emph{\{h\_lit(0,last,(0,1)), b\_lit(0,tail,(0,2)), b\_lit(0,head,(2,1))\}}.
A hypothesis constraint in \popper{} is a constraint written in its meta-language.
For instance, the constraint \emph{$\leftarrow$ h\_lit(R,last,(0,1)), b\_lit(R,last,(1,0))} prunes rules that contain the head literal \emph{last(A,B)} and the body literal \emph{last(B,A)}.


\popper{} uses a generate, test, and constrain loop to search for a solution.
In the generate stage, it uses an ASP solver to find a model of $\mathcal{P}$.
If there is a model, \popper{} converts it to a hypothesis and tests it on the examples; otherwise, it increments the hypothesis size and loops again.
If a hypothesis is not a solution, \popper{} builds hypothesis constraints and adds them to $\mathcal{P}$ to eliminate models and thus prunes the hypothesis space.
For instance, if a hypothesis does not entail all the positive examples, \popper{} builds a specialisation constraint to prune more specific hypotheses.
This loop repeats until \popper{} finds an optimal solution or there are no more hypotheses to test.

\subsubsection{\name{}}
We augment \popper{} with the ability to use the constraints from our discovery approach.
The input from the user is the same as for \popper{} except that we require the BK to be a Datalog program. 
In other words, facts and rules are allowed but not function symbols. 
We call this augmented version \name{}. 
We condition the constraints to only apply to a relation \emph{p} if a property holds for \emph{p}.
For instance, we add an asymmetric constraint to \name{}:

\begin{center}
\begin{tabular}{l}
\emph{$\leftarrow$ asymmetric(P), b\_lit(R,P,(A,B)), b\_lit(R,P,(B,A))}
\end{tabular}
\end{center}

\noindent
If \emph{asymmetric(mother)} holds, \name{} builds the constraint:

\begin{center}
\begin{tabular}{l}
\emph{$\leftarrow$ b\_lit(R,mother,(A,B)), b\_lit(R,mother,(B,A))}
\end{tabular}
\end{center}

\noindent
This constraint prunes all models that contain the literals \emph{b\_lit(R,mother,(A,B))} and \emph{b\_lit(R,mother,(B,A))}, i.e. all rules with the body literals \emph{mother(A,B)} and \emph{mother(B,A)}.
This constraint applies to all variable substitutions for \emph{A} and \emph{B} and all rules \emph{R}.
For instance, the constraint prunes the rule:

\begin{center}
\begin{tabular}{l}
\emph{h $\leftarrow$ sister(A,B), sister(B,C), mother(C,D), mother(D,C)}
\end{tabular}
\end{center}









\noindent
Likewise, we add an exclusivity constraint to \name{}:

\begin{center}
\begin{tabular}{l}
\emph{$\leftarrow$ exclusive(P,Q), b\_lit(R,P,Vars), b\_lit(R,Q,Vars)}
\end{tabular}
\end{center}

\noindent
For instance, if \emph{exclusive(odd,even)} holds, \name{} builds the constraint:

\begin{center}
\begin{tabular}{l}
\emph{$\leftarrow$ b\_lit(R,odd,Vars), b\_lit(R,even,Vars)}
\end{tabular}
\end{center}

\noindent
We add a functional constraint to \name{}:
\begin{center}
\begin{tabular}{l}
\emph{$\leftarrow$ functional(P), b\_lit(R,P,(A,B)), b\_lit(R,P,(A,C)), C!=B}
\end{tabular}
\end{center}

\noindent
For instance, if \emph{functional(tail)} holds, \name{} builds the constraint:

\begin{center}
\begin{tabular}{l}
\emph{$\leftarrow$ b\_lit(R,tail,(A,B)), b\_lit(R,tail,(A,C)), C!=B}
\end{tabular}
\end{center}

\noindent
The ASP encodings for all the constraints are in the appendix.

To avoid complications with recursion, we do not use head predicate symbols (those in the examples) when discovering properties from the BK.



\subsection{Optimal Soundness}
We now prove that our approach only builds optimally sound constraints, i.e. it will not remove optimal solutions from the hypothesis space.
We first show the following lemma:
\begin{lemma} \label{lemma}
Each property in Table \ref{tab:props} has an associated constraint with an unsatisfiable body.
\end{lemma}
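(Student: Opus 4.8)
The plan is to prove the lemma by a direct case analysis over the rows of Table \ref{tab:props}, exploiting a single unifying observation: the discovery approach declares a property to hold for a relation $p$ \emph{only if no counter-example to that property occurs in the encapsulated BK}, and the body of the constraint associated with the property is syntactically the conjunction of literals that characterises exactly such a counter-example. Consequently, asserting the property entails that the constraint body has no satisfying ground substitution over the BK, which is precisely the statement that the body is unsatisfiable.

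First I would fix what ``unsatisfiable body'' means here. A constraint has the form $\leftarrow L_1, \dots, L_k$, where each $L_i$ is a literal over a background relation (possibly a disequality). Under the closed-world assumption used during discovery, the body is \emph{satisfiable} iff there is a ground substitution $\theta$ with $B \models L_i\theta$ for every $i$, and unsatisfiable otherwise. So proving a body unsatisfiable amounts to showing the corresponding conjunctive query returns no answers against $B$.

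Next I would verify, row by row, the correspondence between the counter-example detected by the ASP discovery program and the body of the generated constraint. For asymmetry, the program derives \emph{non\_asymmetric(p)} iff there exist $a,b$ with $p(a,b)$ and $p(b,a)$, and the guarded rule asserts \emph{asymmetric(p)} only when no such pair exists; but $p(a,b), p(b,a)$ is exactly a satisfying substitution of the body $p(A,B), p(B,A)$, so asymmetry forces that body to be unsatisfiable. The identical pattern holds for irreflexivity (counter-example $p(a,a)$, body $p(A,A)$), antitransitivity, antitriangularity, functionality (counter-example $p(a,b), p(a,c)$ with $b \neq c$, body $p(A,B), p(A,C), B \neq C$), injectivity, and exclusivity (counter-example $p(a), q(a)$, body $p(A), q(A)$). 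In each case the body is literally the conjunction appearing in the auxiliary counter-example rule (the \emph{non\_asymmetric}, \emph{non\_functional}, or \emph{both\_hold} rule), so ``property holds'' and ``body unsatisfiable'' name the same condition.

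The main obstacle, such as it is, is not conceptual but a matter of handling the generalisations uniformly: the higher-arity versions of irreflexivity, injectivity, functionality, asymmetry, and exclusivity, together with higher-arity functional dependencies, must each be shown to preserve this counter-example/body correspondence, and the disequality literals in the injective and functional cases require that the constraint's $\neq$ coincide with the discovery program's disequality test. Since the generalised discovery programs are obtained by lifting the same ``search for a counter-example'' template to the relevant argument tuples, and the generated constraint bodies are the matching lifted conjunctions, the argument of the previous paragraph applies verbatim to every property, completing the proof.
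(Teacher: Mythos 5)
Your proposal is correct and rests on the same core observation as the paper's (one-line) proof: each property is precisely the universally quantified negation of the corresponding constraint body, so ``the property holds for $p$ w.r.t.\ $B$'' and ``the body has no satisfying ground substitution over $B$'' are the same condition. You simply unfold this via the ASP discovery program's counter-example search and the CWA, which adds welcome detail (the definition of body unsatisfiability, the disequality cases, the higher-arity liftings) but does not change the argument.
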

\begin{proof}
Follows from rewriting each property and the universal quantification. 
\end{proof}
\noindent
We show the main result:
\begin{proposition}[\textbf{Optimally sound constraint discovery}] \label{prop:sound}
Given the properties in Table \ref{tab:props}, our approach builds optimally sound constraints.
\end{proposition}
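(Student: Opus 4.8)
The plan is to show that each constraint built by our approach removes only hypotheses that cannot be optimal solutions, so that every optimal solution survives in $\mathcal{H}_C$. The central observation, supplied by Lemma \ref{lemma}, is that each property in Table \ref{tab:props} corresponds to a constraint whose body is unsatisfiable given the BK. For instance, if \emph{tail} is functional, then no substitution of variables can make \emph{tail(A,B), tail(A,C), C$\neq$B} true under the closed-world reading of the BK. This is the key fact I would exploit: the constraint bodies identify rule fragments that are \emph{dead code}, i.e. body literals that can never be jointly satisfied.

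The main step is to argue that removing such a rule fragment never destroys an optimal solution. First I would formalise what a constraint prunes: a constraint of the form $\leftarrow \beta$ removes exactly those rules whose body contains an instance of the literal pattern $\beta$. By Lemma \ref{lemma}, any such body $\beta$ is unsatisfiable against $B$, so any rule $r$ containing $\beta$ has an unsatisfiable body and therefore covers no example: for every fact $e$, $B \cup \{r\} \not\models e$ because the body of $r$ fails under $B$. Next I would take an arbitrary optimal solution $H$ and suppose, for contradiction, that $H$ is pruned, i.e. $H$ contains some rule $r$ matching a built constraint. Since $r$ has an unsatisfiable body, deleting $r$ from $H$ yields a smaller hypothesis $H' = H \setminus \{r\}$ that entails exactly the same examples as $H$ (removing an inert rule changes no entailments). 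Hence $H'$ is still complete and consistent, so it is a solution, and $\textrm{cost}(H') < \textrm{cost}(H)$ since it has strictly fewer literals. This contradicts the optimality of $H$, so no optimal solution is pruned, which is exactly the statement that the constraints are optimally sound.

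Two points need care. The first is the entailment-preservation claim, that deleting a rule with an unsatisfiable body leaves the set of entailed atoms unchanged. I would justify this via the least-model / monotonicity semantics of definite programs: a rule only contributes atoms to the least model when its body is satisfiable, so an inert rule adds nothing, and its removal preserves $B \cup H \models e$ for positive examples while retaining $B \cup H \not\models e$ for negative ones. The second, and the step I expect to be the main obstacle, is ensuring that the properties in Table \ref{tab:props} genuinely imply unsatisfiable constraint bodies \emph{in the form our ASP encoding builds them} — in particular that the conditions $\textrm{A}\neq\textrm{C}$, $\textrm{B}\neq\textrm{C}$, etc., and the generalisations to higher arities line up correctly with the counter-example-free discovery of Section \ref{sec:discovery}. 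This is where the closed-world assumption is essential: soundness holds relative to the given BK, not to any intended interpretation, and I would emphasise that the discovery procedure only asserts a property when no counter-example exists in $B$, which is precisely what makes the associated constraint body unsatisfiable.

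Finally I would note that the argument is uniform across the table: it never uses the specific shape of a property beyond Lemma \ref{lemma}'s guarantee of an unsatisfiable body, so the same deletion-of-inert-rule contradiction handles irreflexivity, asymmetry, antitransitivity, injectivity, functionality, and exclusivity together, and extends to the higher-arity variants without additional work.
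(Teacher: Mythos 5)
Your proposal is correct and follows essentially the same argument as the paper: assume an optimal solution is pruned, locate the offending rule, use Lemma \ref{lemma} to conclude its body is unsatisfiable, delete it to obtain a strictly smaller solution, and derive a contradiction with optimality. Your version is somewhat more explicit than the paper's about why deleting an inert rule preserves coverage (via the least-model semantics of definite programs) and about the strict cost decrease, but the structure and key ideas are identical.
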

\begin{proof}
Let $H \in \mathcal{H} \setminus  \mathcal{H}_{C}$. 
Assume $H$ is an optimal solution. 
Since $H \in \mathcal{H}$ but $H \not\in \mathcal{H}_{C}$ there must be a hypothesis constraint $C_1 \in C$ such that $H$ violates $C_1$. $C_1$ is a constraint from Table 1 and prunes rules. Then there exists a rule $C_2 \in H$ and a substitution $\theta$ such that $C_1 \theta \subset C_2$. 
$C_1$ has been built from our library of properties and thus has an unsatisfiable body according to Lemma \ref{lemma}.
Since $C_1$ has an unsatisfiable body, then the body of $C_2$ is unsatisfiable. Thus $C_2$ does not change the coverage of $H$. Then $H \setminus C_2$ is a solution which contradicts our assumption.
\end{proof}

\section{Experiments}
\label{sec:exp}
To evaluate our claim that BK constraint discovery can reduce learning times, our experiments aim to answer the question:

\begin{description}
\item[Q1] Can BK constraint discovery reduce learning times?
\end{description}

\noindent
To answer \textbf{Q1}, we compare the performance of \popper{}\footnote{We use Popper 2.0.0 \cite{popper2}.} and \name{} (\popper{} with BK constraint discovery).

To understand how much our approach can improve learning performance, our experiments aim to answer the question:

\begin{description}
\item[Q2] What effect does BK constraint discovery have on learning times given larger hypothesis spaces?
\end{description}

\noindent
To answer \textbf{Q2}, we compare the performance of \popper{} and \name{} on progressively larger hypothesis spaces.

To understand the scalability of our approach, our experiments aim to answer the question:

\begin{description}
    \item[Q3] How long does our BK constraint discovery approach take given larger BK?
\end{description}

\noindent
To answer \textbf{Q3}, we measure BK constraint discovery time on progressively larger BK.

As our approach is novel, there is no state-of-the-art to compare against, i.e. comparing \name{} against other systems will not allow us to evaluate the benefits of BK constraint discovery.
We have, however, included a comparison of \name{} with other systems in the appendix, which shows that \name{} comprehensively outperforms state-of-the-art systems.


\subsection{Experimental Domains}
We use six domains.
We briefly describe them.
The appendix contains more details and example solutions.

\textbf{Michalski trains.}
The goal is to find a hypothesis that distinguishes eastbound and westbound trains \cite{michalski:trains}.
We use four increasingly complex tasks.


\textbf{IMDB.}
This real-world dataset \cite{mihalkova2007} contains relations between movies, actors, directors, gender and movie genre. 
We learn the binary relations \emph{workedunder}, a more complex variant \emph{workedwithsamegender}, and the disjunction of the two.

\textbf{Chess.} 
The task is to learn a rule for the king-rook-king (\emph{krk}) endgame where the white king protects its rook \cite{celine:bottom}.

\textbf{Zendo.} 
Zendo is a multi-player game in which players try to identify a secret rule by building structures.
We use four increasingly complex tasks.

\textbf{IGGP.}
The goal of \emph{inductive general game playing} \cite{iggp} (IGGP) is to induce rules to explain game traces from the general game playing competition \cite{ggp}.
We use six games: \emph{minimal decay (md)}, \emph{rock-paper-scissors (rps)}, \emph{buttons}, \emph{attrition}, \emph{centipede}, and \emph{coins}.


\textbf{Program synthesis.}
We use a standard synthesis dataset \cite{popper}\footnote{
Our constraint discovery implementation requires Datalog BK, a common restriction \cite{hexmil,apperception}.
However, the BK for the synthesis tasks is a definite program.
Therefore, to discover BK constraints, we use a Datalog subset of the BK restricted to an alphabet with 10 symbols (0-9), where the BK constraint discovery time is 4s.
We use the definite program BK for the learning task.
}.

\subsection{Experimental Setup}
We enforce a timeout of 20 minutes per task.
We measure the mean and standard error over 10 trials.
We round times over one second to the nearest second.
The appendix includes all the experimental details and example solutions.

\textbf{Q1.}
We compare the performance of \popper{} and \name{} on all tasks.
We measure predictive accuracy and learning time.
We separately measure BK constraint discovery time.

\textbf{Q2.}
We compare the performance of \popper{} and \name{} when varying the size of the hypothesis space.
We vary the maximum size of a rule allowed in a hypothesis, i.e. the maximum number of literals allowed in a rule.
We use the IGGP \emph{md} task to answer this question.

\textbf{Q3.}
We measure BK constraint discovery time on progressively larger BK.
We generate BK for the synthesis tasks.
The BK facts are relations between strings of a finite alphabet.
For instance, the BK contains facts such as:

\begin{center}
\begin{tabular}{ll}
\emph{string((1,3,3,7))} & \emph{head((1,3,3,7),(1,))}\\
\emph{tail((1,3,3,7),(3,3,7))} & \emph{append((1,),(3,3,7),(1,3,3,7))}\\
\end{tabular}
\end{center}

\noindent
We generate larger BK by increasing the size of the alphabet.

\subsection{Experimental Results}

\subsubsection{Q1}
Table \ref{tab:qtimes} shows the learning times. 
It shows that on these datasets \name{} (i) never needs more time than \popper{}, and (ii) can drastically reduce learning time.
A paired t-test confirms the significance of the difference at the $p < 0.01$ level.
For instance, for the \emph{buttons} task (the appendix includes an example solution), the learning time is reduced from 686s to 25s, a \textbf{96\%} reduction.

Table \ref{tab:preprocesstime} shows that BK constraint discovery time is always less than a second, except for the synthesis tasks.
For instance, for the real-world \emph{imdb3} task, BK constraint discovery takes 0.02s yet reduces learning time from 366s to 287s, a 21\% reduction.

\begin{table}[ht]
\centering
\begin{tabular}{@{}l|ccc@{}}
    \textbf{Task} & \textbf{\popper{}} & \textbf{\name{}} & \textbf{Change}\\
    \midrule
\emph{trains1} & 5 $\pm$ 0.1 & 4 $\pm$ 0.1 & \textbf{-20\%} \\
\emph{trains2} & 5 $\pm$ 0.2 & 4 $\pm$ 0.3 & \textbf{-20\%} \\
\emph{trains3} & 27 $\pm$ 0.8 & 22 $\pm$ 0.6 & \textbf{-18\%} \\
\emph{trains4} & 24 $\pm$ 0.8 & 20 $\pm$ 0.5 & \textbf{-16\%} \\
\midrule
\emph{zendo1} & 8 $\pm$ 2 & 6 $\pm$ 1 & \textbf{-25\%} \\
\emph{zendo2} & 32 $\pm$ 2 & 31 $\pm$ 2 & \textbf{-3\%} \\
\emph{zendo3} & 33 $\pm$ 2 & 31 $\pm$ 1 & \textbf{-6\%} \\
\emph{zendo4} & 24 $\pm$ 3 & 24 $\pm$ 3 & 0\% \\
\midrule
\emph{imdb1} & 1 $\pm$ 0 & 1 $\pm$ 0 & 0\% \\
\emph{imdb2} & 2 $\pm$ 0.1 & 2 $\pm$ 0 & 0\% \\
\emph{imdb3} & 366 $\pm$ 23 & 287 $\pm$ 17 & \textbf{-21\%} \\
\midrule
\emph{krk} & 48 $\pm$ 6 & 9 $\pm$ 0.6 & \textbf{-81\%} \\
\midrule
\emph{rps} & 37 $\pm$ 1 & 6 $\pm$ 0.2 & \textbf{-83\%} \\
\emph{centipede} & 47 $\pm$ 2 & 9 $\pm$ 0.2 & \textbf{-80\%} \\
\emph{md} & 142 $\pm$ 7 & 13 $\pm$ 0.4 & \textbf{-90\%} \\
\emph{buttons} & 686 $\pm$ 109 & 25 $\pm$ 1 & \textbf{-96\%} \\
\emph{attrition} & 410 $\pm$ 20 & 57 $\pm$ 2 & \textbf{-86\%} \\
\emph{coins} & 496 $\pm$ 19 & 345 $\pm$ 18 & \textbf{-30\%} \\
\emph{buttons-goal} & 11 $\pm$ 0.2 & 5 $\pm$ 0.1 & \textbf{-54\%} \\
\emph{coins-goal} & 122 $\pm$ 6 & 76 $\pm$ 2 & \textbf{-37\%} \\
\midrule
\emph{dropk} & 4 $\pm$ 0.3 & 3 $\pm$ 0.2 & \textbf{-25\%} \\
\emph{droplast} & 41 $\pm$ 3 & 23 $\pm$ 2 & \textbf{-43\%} \\
\emph{evens} & 33 $\pm$ 7 & 9 $\pm$ 1 & \textbf{-72\%} \\
\emph{finddup} & 51 $\pm$ 8 & 32 $\pm$ 4 & \textbf{-37\%} \\
\emph{last} & 4 $\pm$ 0.4 & 3 $\pm$ 0.2 & \textbf{-25\%} \\
\emph{len} & 31 $\pm$ 5 & 16 $\pm$ 2 & \textbf{-48\%} \\
\emph{sorted} & 74 $\pm$ 5 & 23 $\pm$ 1 & \textbf{-68\%} \\
\emph{sumlist} & 554 $\pm$ 122 & 320 $\pm$ 40 & \textbf{-42\%} \\

\end{tabular}
\caption{
Learning times in seconds.
We round times over one second to the nearest second.
Error is standard error.
}
\label{tab:qtimes}
\end{table}

\begin{table}[ht]
\centering
\begin{tabular}{@{}l|c@{}}
    \textbf{Domain} & \textbf{Time}\\ 
    \midrule
\emph{trains} & 0.22 $\pm$ 0.00 \\
\midrule
\emph{zendo} & 0.03 $\pm$ 0.00 \\
\midrule
\emph{imdb} & 0.02 $\pm$ 0.00 \\
\midrule
\emph{krk} & 0.10 $\pm$ 0.00 \\
\midrule
\emph{rps} & 0.02 $\pm$ 0.00 \\
\emph{centipede} & 0.02 $\pm$ 0.00 \\
\emph{md} & 0.01 $\pm$ 0.00 \\
\emph{buttons} & 0.02 $\pm$ 0.00 \\
\emph{attrition} & 0.01 $\pm$ 0.00 \\
\emph{coins} & 0.03 $\pm$ 0.00 \\
\midrule
\emph{synthesis} & 4.00 $\pm$ 0.40\\
\end{tabular}
\caption{
BK constraint discovery times in seconds.
}
\label{tab:preprocesstime}
\end{table}

To understand why our approach works, consider the \emph{rps} task.
Our approach quickly (0.02s) discovers that the relation \emph{succ} is irreflexive, injective, functional, antitransitive, antitriangular, and asymmetric.
The resulting constraints reduce the number of rules in the hypothesis space from 1,189,916 to 70,270. 
This reduction in the number of rules in turn considerably reduces the number of programs to consider. 
As shown in Table \ref{tab:num_progs}, the number of programs generated and tested is reduced from 6297 to 988, an \textbf{84\%} reduction.

Table \ref{tab:qaccs} shows the predictive accuracies. 
It shows that \name{} (i) has equal or higher predictive accuracy than \popper{} on all the tasks, and (ii) can improve predictive accuracy.
A McNemar's test confirms the significance of the difference at the p $<$ 0.01 level.

\begin{table}[t]
\centering
\begin{tabular}{@{}l|ccc@{}}
    \textbf{Task} & \textbf{\popper{}} & \textbf{\name{}} & \textbf{Change}\\
    \midrule
\emph{trains1} & 617 $\pm$ 11 & 575 $\pm$ 14 & \textbf{-6\%} \\
\emph{trains2} & 617 $\pm$ 10 & 556 $\pm$ 34 & \textbf{-9\%} \\
\emph{trains3} & 2532 $\pm$ 2 & 2341 $\pm$ 2 & \textbf{-7\%} \\
\emph{trains4} & 2712 $\pm$ 0 & 2519 $\pm$ 0 & \textbf{-7\%} \\
\midrule
\emph{zendo1} & 2179 $\pm$ 783 & 2011 $\pm$ 698 & \textbf{-7\%} \\
\emph{zendo2} & 6972 $\pm$ 414 & 6437 $\pm$ 475 & \textbf{-7\%} \\
\emph{zendo3} & 7828 $\pm$ 493 & 7377 $\pm$ 444 & \textbf{-5\%} \\
\emph{zendo4} & 5512 $\pm$ 738 & 5303 $\pm$ 628 & \textbf{-3\%} \\
\midrule
\emph{imdb1} & 5 $\pm$ 0 & 7 $\pm$ 0 & +40\% \\
\emph{imdb2} & 34 $\pm$ 1 & 39 $\pm$ 1 & +14\% \\
\emph{imdb3} & 330 $\pm$ 0.3 & 300 $\pm$ 0.7 & \textbf{-9\%} \\
\midrule
\emph{krk} & 502 $\pm$ 59 & 56 $\pm$ 7 & \textbf{-88\%} \\
\midrule
\emph{rps} & 6297 $\pm$ 7 & 988 $\pm$ 2 & \textbf{-84\%} \\
\emph{centipede} & 2312 $\pm$ 0 & 947 $\pm$ 0 & \textbf{-59\%} \\
\emph{md} & 2415 $\pm$ 51 & 714 $\pm$ 10 & \textbf{-70\%} \\
\emph{buttons} & 4610 $\pm$ 57 & 1248 $\pm$ 9 & \textbf{-72\%} \\
\emph{attrition} & 25560 $\pm$ 188 & 7221 $\pm$ 67 & \textbf{-71\%} \\
\emph{coins} & 63370 $\pm$ 1778 & 45037 $\pm$ 1357 & \textbf{-28\%} \\ 
\emph{buttons-goal} & 109570 $\pm$ 169 & 49555 $\pm$ 65 & \textbf{-54\%} \\
\emph{coins-goal} & 23533 $\pm$ 0 & 18483 $\pm$ 0 & \textbf{-21\%} \\
\midrule
\emph{dropk} & 535 $\pm$ 34 & 433 $\pm$ 25 & \textbf{-19\%} \\
\emph{droplast} & 420 $\pm$ 17 & 330 $\pm$ 15 & \textbf{-21\%} \\
\emph{evens} & 877 $\pm$ 81 & 415 $\pm$ 39 & \textbf{-52\%} \\
\emph{finddup} & 7335 $\pm$ 919 & 4887 $\pm$ 641 & \textbf{-33\%} \\
\emph{last} & 560 $\pm$ 101 & 310 $\pm$ 56 & \textbf{-44\%} \\
\emph{len} & 1940 $\pm$ 239 & 1390 $\pm$ 148 & \textbf{-28\%} \\
\emph{sorted} & 2630 $\pm$ 159 & 1311 $\pm$ 105 & \textbf{-50\%} \\
\emph{sumlist} & 9422 $\pm$ 3121 & 5360 $\pm$ 2061 & \textbf{-43\%} \\

\end{tabular}
\caption{
Number of programs generated.
Error is standard error. 
}
\label{tab:num_progs}
\end{table}

\begin{table}[t]
\centering
\begin{tabular}{@{}l|ccc@{}}
    \textbf{Task} & \textbf{\popper{}} & \textbf{\name{}} & \textbf{Change}\\
    \midrule
\emph{trains1} & 100 $\pm$ 0 & 100 $\pm$ 0 & 0\% \\
\emph{trains2} & 98 $\pm$ 0 & 98 $\pm$ 0 & 0\% \\
\emph{trains3} & 99 $\pm$ 0 & 99 $\pm$ 0 & 0\% \\
\emph{trains4} & 100 $\pm$ 0 & 100 $\pm$ 0 & 0\% \\
\midrule
\emph{zendo1} & 99 $\pm$ 0 & 99 $\pm$ 0 & 0\% \\
\emph{zendo2} & 96 $\pm$ 1 & 97 $\pm$ 1 & \textbf{+1\%} \\
\emph{zendo3} & 93 $\pm$ 1 & 93 $\pm$ 2 & 0\% \\
\emph{zendo4} & 97 $\pm$ 0 & 97 $\pm$ 0 & 0\% \\
\midrule
\emph{imdb1} & 100 $\pm$ 0 & 100 $\pm$ 0 & 0\% \\
\emph{imdb2} & 100 $\pm$ 0 & 100 $\pm$ 0 & 0\% \\
\emph{imdb3} & 100 $\pm$ 0 & 100 $\pm$ 0 & 0\% \\
\midrule
\emph{krk} & 99 $\pm$ 0 & 99 $\pm$ 0 & 0\% \\
\midrule
\emph{rps} & 100 $\pm$ 0 & 100 $\pm$ 0 & 0\% \\
\emph{centipede} & 100 $\pm$ 0 & 100 $\pm$ 0 & 0\% \\
\emph{md} & 100 $\pm$ 0 & 100 $\pm$ 0 & 0\% \\
\emph{buttons} & 100 $\pm$ 0 & 100 $\pm$ 0 & 0\% \\
\emph{attrition} & 98 $\pm$ 0 & 98 $\pm$ 0 & 0\% \\
\emph{coins} & 100 $\pm$ 0 & 100 $\pm$ 0 & 0\% \\
\emph{buttons-goal} & 98 $\pm$ 1 & 99 $\pm$ 0 & \textbf{+1\%} \\
\emph{coins-goal} & 100 $\pm$ 0 & 100 $\pm$ 0 & 0\% \\
\midrule
\emph{dropk} & 100 $\pm$ 0 & 100 $\pm$ 0 & 0\% \\
\emph{droplast} & 100 $\pm$ 0 & 100 $\pm$ 0 & 0\% \\
\emph{evens} & 100 $\pm$ 0 & 100 $\pm$ 0 & 0\% \\
\emph{finddup} & 98 $\pm$ 0 & 99 $\pm$ 0 & \textbf{+1\%} \\
\emph{last} & 100 $\pm$ 0 & 100 $\pm$ 0 & 0\% \\
\emph{len} & 100 $\pm$ 0 & 100 $\pm$ 0 & 0\% \\
\emph{sorted} & 97 $\pm$ 2 & 97 $\pm$ 2 & 0\% \\
\emph{sumlist} & 90 $\pm$ 6 & 100 $\pm$ 0 & \textbf{+11\%} \\

\end{tabular}
\caption{
Predictive accuracies.
We round times over one second to the nearest second.
Error is standard error.
}
\label{tab:qaccs}
\end{table}

There are two reasons for this accuracy improvement.
First, \popper{} sometimes does not find a good solution within the time limit. 
By contrast, as there are fewer hypotheses for \name{} to consider (Table \ref{tab:num_progs}), it sometimes finds a solution quicker.
Second, as our approach is optimally sound (Proposition \ref{prop:sound}), it is guaranteed to lead to a hypothesis space that is a subset of the original one yet still contains all optimal solutions.
According to the Blumer bound \cite{blumer:bound}, given two hypotheses spaces of different sizes, searching the smaller space will result in higher predictive accuracy compared to searching the larger one, assuming the target hypothesis is in both. 

\subsubsection{Q2}
Table \ref{tab:q2-times} shows that \name{} can drastically reduce learning time as the hypothesis space grows (relative to \popper{}). 
For instance, for the \emph{md} task with a maximum rule size of 6 the learning times of \popper{} and \name{} are 113s and 10s respectively.
With a maximum rule size of 8, \popper{} times out after 20 minutes, whereas \name{} learns a solution in 47s.

\begin{table}[ht!]
\centering
\begin{tabular}{@{}c|cccc@{}}
\textbf{Task} & \textbf{Size} & \textbf{\popper{}} & \textbf{\name{}} & \textbf{Change}\\
\midrule
\emph{md} & 5 & 12 $\pm$ 0.9 & 3 $\pm$ 0.3 & \textbf{-75\%} \\
\emph{md} & 6 & 113 $\pm$ 2 & 10 $\pm$ 0.1 & \textbf{-91\%} \\
\emph{md} & 7 & 864 $\pm$ 156 & 23 $\pm$ 0.9 & \textbf{-97\%} \\
\emph{md} & 8 & \emph{timeout} & 47 $\pm$ 2 & \textbf{-96\%} \\
\emph{md} & 9 & \emph{timeout} & 48 $\pm$ 3 & \textbf{-96\%} \\
\emph{md} & 10 & \emph{timeout} & 52 $\pm$ 0.1 & \textbf{-95\%} \\
\end{tabular}
\caption{
Learning times when progressively increasing the maximum rule size and thus the hypothesis space.
The timeout is 20 minutes (1200s).
We round times over one second to the nearest second.
Error is standard error. 
}
\label{tab:q2-times}
\end{table}

\subsubsection{Q3}
Figure \ref{fig:scalability} shows that our approach scales linearly in the size of the BK and can scale to millions of facts.
For instance, for BK with around 8m facts, our approach takes around 47s.

\begin{figure}[ht!]
\centering
\begin{tikzpicture}[scale=.7]
    \begin{axis}[
    scaled x ticks = false,
    xlabel=Num. background facts (millions),
    ylabel=Time (seconds),
    xtick={1000000,2000000,3000000,4000000,5000000,6000000,7000000,8000000},
    xticklabels={1m,2m,3m,4m,5m,6m,7m,8m},
    ylabel style={yshift=-1mm},
    label style={font=\Large},
    label style={font=\Large},
    tick label style={font=\Large},
    legend style={legend pos=north west,style={nodes={right}}}
    ]

\addplot+[red,mark=square*,mark options={fill=red},error bars/.cd,y dir=both,y explicit]
table [
x=num_facts,
y=duration,
col sep=comma,
y error plus expr=\thisrow{error},
y error minus expr=\thisrow{error}
] {data/scaling-preprocess.csv};

    \end{axis}
  \end{tikzpicture}
 \caption{
 BK constraint discovery time when increasing the number of background facts.
 }
  \label{fig:scalability}
\end{figure}

\section{Conclusions and Limitations}

To improve learning performance, we have introduced a bias discovery approach.
The three key ideas are (i) use the BK to discover a bias to restrict the hypothesis space, (ii) express the bias as constraints, and (iii) discover constraints \emph{before} searching for a solution.
Proposition \ref{prop:sound} shows that our approach is optimally sound.
Our experimental results on six domains show that our approach can (i) substantially reduce learning times, and (ii) scale to BK with millions of facts.

\subsection*{Limitations and Future Work}

\textbf{Finite BK.} 
Our constraint discovery approach is sufficiently general to handle definite programs as BK. 
However, as our implementation uses ASP, we require a finite grounding of the BK.
This restriction means that our implementation cannot handle BK with an infinite grounding, such as when reasoning about continuous values. 
Future work should address this limitation, such as by using top-down dependency discovery methods \cite{DBLP:journals/aicom/FlachS99}.

\textbf{CWA.}
We adopt a closed-world assumption to discover constraints from the given BK.
For instance, we assume that \emph{odd(2)} does not hold if not given as BK.
As almost all ILP systems adopt a CWA, this limitation only applies if our approach is used with a system that does not make the CWA. 
We also assume that the BK is noiseless, i.e. if a fact is true in the BK, then it is meant to be true.
Handling noisy BK is an open challenge \cite{ilpintro} that is beyond the scope of this paper.

\textbf{Relational properties.}
We use a predefined set of relational properties and dependencies. The main direction for future work, therefore, is to discover more general properties and constraints.
For instance, consider the two rules \emph{h $\leftarrow$ empty(A), head(A,B)} and \emph{h $\leftarrow$ empty(A), tail(A,B)}. 
The bodies of these rules are unsatisfiable because an empty list cannot have a head or a tail.
We cannot, however, currently capture this information.
Therefore, we think that this paper raises two research challenges of (i) identifying more general properties, and (ii) developing approaches to efficiently discover properties.
\section*{Code, Data, and Appendices}
The experimental code and data are available at \url{https://github.com/logic-and-learning-lab/aaai23-disco}.

\section*{Acknowledgements}
The first author is supported by the EPSRC fellowship \emph{The Automatic Computer Scientist} (EP/V040340/1).
The second author is supported by the EPSRC grant \emph{Explainable Drug Design}.
For the purpose of Open Access, the author has applied a CC BY public copyright licence to any Author Accepted Manuscript version arising from this submission.

\begin{appendices}

\section{Properties}
Table \ref{tab:props2} shows the relational properties we use, up to arity three.

\begin{table*}
\centering
\footnotesize
\begin{tabular}{@{}llll@{}}
\textbf{Name} & \textbf{Property} & \textbf{Constraint} & \textbf{Example}\\
\midrule
Irreflexive$_{aaa}$ & 
\emph{$\neg$p(A,A,A)} & 
\emph{$\leftarrow$ p(A,A,A)} & 
\emph{$\leftarrow$ modulo(A,A,A)} 
\\

Injective$_{abc-dbc}$ & 
\emph{p(A,B,C), p(D,B,C) $\rightarrow$ A=D} & 
\emph{$\leftarrow$ p(A,B,C), p(D,B,C), A$\neq$D} & 
\emph{$\leftarrow$ add(A,B,C), add(D,B,C), A$\neq$D}
\\
Injective$_{abc-adc}$ & 
\emph{p(A,B,C), p(A,D,C) $\rightarrow$ B=D} & 
\emph{$\leftarrow$ p(A,B,C), p(A,D,C), B$\neq$D} & 
\emph{$\leftarrow$ add(A,B,C), add(A,D,C), B$\neq$D}
\\
Functional$_{abc-abd}$\footnote{Called unique\_ab\_c in the ASP encoding.} & 
\emph{p(A,B,C), p(A,B,D) $\rightarrow$ C=D} & 
\emph{$\leftarrow$ p(A,B,C), p(A,B,D), C$\neq$D} &
\emph{$\leftarrow$ add(A,B,C), add(A,B,D), C$\neq$D}
\\


Asymmetric$_{abc-acb}$ & 
\emph{p(A,B,C) $\rightarrow$ $\neg$p(A,C,B)} & 
\emph{$\leftarrow$ p(A,B,C), p(A,C,B)} & 
\emph{$\leftarrow$ cons(A,B,C), cons(A,C,B)}
\\
Asymmetric$_{abc-bac}$ & 
\emph{p(A,B,C) $\rightarrow$ $\neg$p(B,A,C)} & 
\emph{$\leftarrow$ p(A,B,C), p(B,A,C)} &
\emph{$\leftarrow$ cons(A,B,C), cons(B,A,C)}
\\
Asymmetric$_{abc-bca}$ & 
\emph{p(A,B,C) $\rightarrow$ $\neg$p(B,C,A)} & 
\emph{$\leftarrow$ p(A,B,C), p(B,C,A)} & 
\emph{$\leftarrow$ cons(A,B,C), cons(B,C,A)}
\\
Asymmetric$_{abc-cab}$ & 
\emph{p(A,B,C) $\rightarrow$ $\neg$p(C,A,B)} & 
\emph{$\leftarrow$ p(A,B,C), p(C,A,B)} & 
\emph{$\leftarrow$ cons(A,B,C), cons(C,A,B)}
\\
Asymmetric$_{abc-cba}$ & 
\emph{p(A,B,C) $\rightarrow$ $\neg$p(C,B,A)} & 
\emph{$\leftarrow$ p(A,B,C), p(C,B,A)} & 
\emph{$\leftarrow$ select(A,B,C), select(C,B,A)}
\\

Exclusive$_{ab}$ & 
\emph{p(A,B) $\rightarrow$ $\neg$q(A,B)} &
\emph{$\leftarrow$ p(A,B), q(A,B)} &
\emph{$\leftarrow$ head(A,B), tail(A,B)}
\\
Exclusive$_{abc}$ & 
\emph{p(A,B,C) $\rightarrow$ $\neg$q(A,B,C)} &
\emph{$\leftarrow$ p(A,B,C), q(A,B,C)} & \emph{$\leftarrow$ select(A,B,C), append(A,B,C)}
\\
Singleton & 
\emph{p(A), p(B) $\rightarrow$ A=B} &
\emph{$\leftarrow$ p(A), p(B), A$\neq$B} &
\emph{$\leftarrow$ one(A), one(B)}
\\
\end{tabular}
\caption{
Properties and constraints. 
This table is a supplement to Table 1.
}
\label{tab:props2}
\end{table*}

\section{ASP Encoding}
Figure \ref{fig:asp} shows the ASP encoding.
In practice, we also use optional types to reduce grounding.
However, for brevity, we only show the untyped encodings.
We also only show the encodings for unary, binary, and ternary relations.

\section{\name{} Constraints}
Figure \ref{fig:popper} shows the constraints used by \name{}.

\begin{figure*}[ht!]
\begin{lstlisting}
body_pred(P,1):-holds(P,(_,)).
body_pred(P,2):-holds(P,(_,_)).
body_pred(P,3):-holds(P,(_,_,_)).
prop(antitransitive,P):- body_pred(P,2), not antitransitive_aux(P).
prop(antitriangular,P):- body_pred(P,2), not antitriangular_aux(P).
prop(asymmetric_ab_ba,P):- holds(P,(A,B)), not holds(P,(B,A)).
prop(asymmetric_abc_acb,P):- holds(P,(A,B,C)), not holds(P,(A,C,B)).
prop(asymmetric_abc_bac,P):- holds(P,(A,B,C)), not holds(P,(B,A,C)).
prop(asymmetric_abc_bca,P):- holds(P,(A,B,C)), not holds(P,(B,C,A)).
prop(asymmetric_abc_cab,P):- holds(P,(A,B,C)), not holds(P,(C,A,B)).
prop(asymmetric_abc_cba,P):- holds(P,(A,B,C)), not holds(P,(C,B,A)).
prop(singleton,P):- body_pred(P,_), #count{Vars : holds(P,Vars)} == 1.
prop(unsat_pair,P,Q):- body_pred(Q,A), body_pred(P,A), P > Q, #count{Vars : holds(P,Vars), holds(Q,Vars)} == 0.
prop(unique_a_b,P):- body_pred(P,2), not unique_a_b_(P).
prop(unique_b_a,P):- body_pred(P,2), not unique_b_a_(P).
prop(unique_a_bc,P):- body_pred(P,3), not unique_a_bc_(P).
prop(unique_ab_c,P):- body_pred(P,3), not unique_ab_c_(P).
prop(unique_ac_b,P):- body_pred(P,3), not unique_ac_b_(P).
prop(unique_b_ac,P):- body_pred(P,3), not unique_b_ac_(P).
prop(unique_bc_a,P):- body_pred(P,3), not unique_bc_a_(P).
prop(unique_c_ab,P):- body_pred(P,3), not unique_c_ab_(P).
antitransitive_aux(P):- holds(P,(A,B)), holds(P,(B,C)), holds(P,(A,C)).
antitriangular_aux(P):-  holds(P,(A,B)), holds(P,(B,C)),
unique_a_b_(P):-holds(P,(A,_)), #count{B : holds(P,(A,B))} > 1.
unique_b_a_(P):-holds(P,(_,B)), #count{A : holds(P,(A,B))} > 1.
unique_a_bc_(P):-holds(P,(A,_,_)), #count{B,C : holds(P,(A,B,C))} > 1.
unique_ab_c_(P):-holds(P,(A,B,_)), #count{C : holds(P,(A,B,C))} > 1.
unique_ac_b_(P):-holds(P,(A,_,C)), #count{B : holds(P,(A,B,C))} > 1.
unique_b_ac_(P):-holds(P,(_,B,_)), #count{A,C : holds(P,(A,B,C))} > 1.
unique_bc_a_(P):-holds(P,(_,B,C)), #count{A : holds(P,(A,B,C))} > 1.
unique_c_ab_(P):-holds(P,(_,_,C)), #count{A,B : holds(P,(A,B,C))} > 1.
\end{lstlisting}
\caption{ASP encoding to discover properties and functional dependencies in the BK.}
\label{fig:asp}
\end{figure*}

\begin{figure*}
\begin{lstlisting}
:- prop(asymmetric_ab_ba,P), body_literal(Rule,P,_,(A,B)), body_literal(Rule,P,_,(B,A)).
:- prop(asymmetric_abc_acb,P), body_literal(Rule,P,_,(A,B,C)), body_literal(Rule,P,_,(A,C,B)).
:- prop(asymmetric_abc_bac,P), body_literal(Rule,P,_,(A,B,C)), body_literal(Rule,P,_,(B,A,C)).
:- prop(asymmetric_abc_bca,P), body_literal(Rule,P,_,(A,B,C)), body_literal(Rule,P,_,(B,C,A)).
:- prop(asymmetric_abc_cab,P), body_literal(Rule,P,_,(A,B,C)), body_literal(Rule,P,_,(C,A,B)).
:- prop(asymmetric_abc_cba,P), body_literal(Rule,P,_,(A,B,C)), body_literal(Rule,P,_,(C,B,A)).
:- prop(unique_a_b,P), body_literal(Rule,P,_,(A,_)), #count{B : body_literal(Rule,P,_,(A,B))} > 1.
:- prop(unique_a_bc,P), body_literal(Rule,P,_,(A,_,_)), #count{B,C : body_literal(Rule,P,_,(A,B,C))} > 1.
:- prop(unique_ab_c,P), body_literal(Rule,P,_,(A,B,_)), #count{C : body_literal(Rule,P,_,(A,B,C))} > 1.
:- prop(unique_ac_b,P), body_literal(Rule,P,_,(A,_,C)), #count{B : body_literal(Rule,P,_,(A,B,C))} > 1.
:- prop(unique_b_a,P), body_literal(Rule,P,_,(_,B)), #count{A : body_literal(Rule,P,_,(A,B))} > 1.
:- prop(unique_b_ac,P), body_literal(Rule,P,_,(_,B,_)), #count{A,C : body_literal(Rule,P,_,(A,B,C))} > 1.
:- prop(unique_bc_a,P), body_literal(Rule,P,_,(_,B,C)), #count{A : body_literal(Rule,P,_,(A,B,C))} > 1.
:- prop(unique_c_ab,P), body_literal(Rule,P,_,(_,_,C)), #count{A,B : body_literal(Rule,P,_,(A,B,C))} > 1.
:- prop(antitransitive,P), body_literal(Rule,P,_,(A,B)), body_literal(Rule,P,_,(B,C)), body_literal(Rule,P,_,(A,C)).
:- prop(antitriangular,P), body_literal(Rule,P,_,(A,B)), body_literal(Rule,P,_,(B,C)), body_literal(Rule,P,_,(C,A)).
:- prop(singleton,P), body_literal(Rule,P,_,_), #count{Vars : body_literal(Rule,P,A,Vars)} > 1.
:- prop(unsat_pair,P,Q), body_literal(Rule,P,_,Vars), body_literal(Rule,Q,_,Vars).
\end{lstlisting}
\caption{ASP encoding of constraints in \name{}.}
\label{fig:popper}
\end{figure*}

\section{Experiments}

\subsection{Experimental domains}
We describe characteristics of the domains and tasks used in our experiments in Table \ref{tab:dataset} and \ref{tab:tasks}. Figure \ref{fig:sols} shows example solutions for some of the tasks.

\begin{table}[ht!]
\footnotesize
\centering
\begin{tabular}{@{}l|cccc@{}}
\textbf{Task} & \textbf{\# examples} & \textbf{\# relations} & \textbf{\# constants} & \textbf{\# facts}\\
\midrule
\emph{trains} & 1000 & 20 & 8561 & 28503 \\
\midrule
\emph{imdb1} & 383 & 6 & 299 & 1330 \\
\emph{imdb2} & 71825 & 6 & 299 & 1330 \\
\emph{imdb3} & 121801 & 6 & 299 & 1330 \\
\midrule
\emph{zendo1} & 100 & 16 & 1049 & 2270\\
\emph{zendo2} & 100 & 16 & 1047 & 2184\\
\emph{zendo3} & 100 & 16 & 1100 & 2320\\
\emph{zendo4} & 100 & 16 & 987 & 2087\\
\midrule
\emph{md} & 54 & 12 & 13 & 29 \\
\emph{buttons} & 530 & 13 & 60 & 656 \\
\emph{rps} & 464 & 6 & 64 & 405 \\
\emph{coins} & 2544 & 9 & 110 & 1101 \\
\emph{centipede} & 26 & 34 & 61 & 138 \\
\emph{attrition} & 672 & 12 & 65 & 163 \\
\midrule
\emph{krk} & 50 & 8 & 162 & 6744 \\
\midrule
\emph{dropk} & 20 & 10 & n/a & n/a\\
\emph{droplast} & 20 & 10 & n/a & n/a\\
\emph{evens} & 20 & 10 & n/a & n/a\\
\emph{finddup} & 20 & 10 & n/a & n/a\\
\emph{last} & 20 & 10 & n/a & n/a\\
\emph{len} & 20 & 10 & n/a & n/a\\
\emph{sorted} & 20 & 10 & n/a & n/a\\
\emph{sumlist} & 20 & 10 & n/a & n/a\\
\end{tabular}
\caption{Experimental domain description.}

\label{tab:dataset}
\end{table}

\begin{table}[ht!]
\footnotesize
\centering
\begin{tabular}{@{}l|ccc@{}}
\textbf{Task} & \textbf{\# rules} & \textbf{\# literals} & \textbf{max literals per rule}\\
\midrule
\emph{train1} & 1 & 6 &  6\\
\emph{train2} & 2 & 11 & 6\\
\emph{train3} & 3 & 17 &  7\\
\emph{train4} & 4 & 26 &  7\\
\midrule
\emph{zendo1} & 1 & 7 & 7 \\
\emph{zendo2} & 2 & 14 & 7 \\
\emph{zendo3} & 3 & 20 & 7\\
\emph{zendo4} & 4 & 23 & 7\\
\midrule
\emph{imdb1} & 1 & 5 & 5 \\
\emph{imdb2} & 1 & 5 & 5 \\
\emph{imdb3} & 2 & 10 & 5 \\
\midrule
\emph{krk} & 1 & 8 & 8 \\
\midrule
\emph{md} & 2 & 11 & 6 \\
\emph{buttons} & 10 & 61 & 7\\
\emph{rps} & 4 & 25 & 7\\
\emph{coins} & 16 & 45 & 7\\
\emph{attrition} & 3 & 14 & 5\\
\emph{centipede} & 2 & 8 & 4\\
\midrule
\emph{dropk} & 2 & 7 & 4\\
\emph{droplast} & 2 & 8 & 5\\
\emph{evens} & 2 & 7 & 5\\
\emph{finddup} & 2 & 7 & 4\\
\emph{last} & 2 & 7 & 4\\
\emph{len} & 2 & 7 & 4\\
\emph{sorted} & 2 & 9 & 6\\
\emph{sumlist} & 2 & 7 & 5\\
\end{tabular}
\caption{
Experimental tasks description.
}

\label{tab:tasks}
\end{table}

\paragraph{Michalski trains.}
The goal of these tasks is to find a hypothesis that distinguishes eastbound and westbound trains \cite{michalski:trains}. There are four increasingly complex tasks. 
There are 1000 examples but the distribution of positive and negative examples is different for each task.
We randomly sample the examples and split them into 80/20 train/test partitions.

\paragraph{Zendo.} Zendo is an inductive game in which one player, the Master, creates a rule for structures made of pieces with varying attributes to follow. The other players, the Students, try to discover the rule by building and studying structures which are labelled by the Master as following or breaking the rule. The first student to correctly state the rule wins. We learn four increasingly complex rules for structures made of at most 5 pieces of varying color, size, orientation and position. 

\paragraph{IMDB.}
The real-world IMDB dataset \cite{mihalkova2007} includes relations between movies, actors, directors, movie genre, and gender. It has been created from the International Movie Database (IMDB.com) database. We learn the relation \emph{workedunder/2}, a more complex variant \emph{workedwithsamegender/2}, and the disjunction of the two.

\paragraph{Chess.} The task is to learn a chess pattern in the king-rook-king (\emph{krk}) endgame, which is the chess ending with white having a king and a rook and black having a king. We learn the concept of white rook protection by the white king \cite{celine:bottom}.

\paragraph{IGGP.}
In \emph{inductive general game playing} \cite{iggp} (IGGP) the task is to induce a hypothesis to explain game traces from the general game playing competition \cite{ggp}.
Although seemingly a toy problem, IGGP is representative of many real-world problems, such as inducing semantics of programming languages \cite{DBLP:conf/ilp/BarthaC19}. 
We use six games: \emph{minimal decay (md)}, \emph{rock, paper, scissors (rps)}, \emph{buttons}, \emph{attrition}, \emph{centipede}, and \emph{coins}.

\paragraph{Program Synthesis.} This dataset includes list transformation tasks. It involves learning recursive programs which has been identified as a difficult challenge for ILP systems \cite{ilp20}. 

\subsection{Experimental Setup}
We measure the mean and standard error of the predictive accuracy and learning time.
We use a 3.8 GHz 8-Core Intel Core i7 with 32GB of ram.
The systems use a single CPU.

\textbf{Q1.}
We compare the performance of \popper{} and \name{} on all the tasks.
We use \popper{} 2.0.0 \cite{popper2}.

\textbf{Q2.}
We compare the performance of \popper{} and \name{} when varying the size of the hypothesis space.
We vary the maximum size of a rule allowed in a hypothesis, ie the maximum number of literals in a rule.
We focus on the \emph{md} task.

\subsection{Experimental Results}

\subsubsection{Comparison against other ILP systems}

We compare\footnote{
We also tried to use \ilasp{} \cite{ilasp}.
However, \ilasp{} first pre-computes every possible rule in a hypothesis space.
This approach is infeasible for our datasets.
For instance, on the trains tasks, \ilasp{} took 2 seconds to pre-compute rules with three body literals; 20 seconds for rules with four body literals; and 12 minutes for rules with five body literals.  
Since the simplest train task requires rules with six body literals, \ilasp{} is unusable.
In addition, \ilasp{} cannot learn Prolog programs so is unusable in the synthesis tasks.
} \name{} against \popper{}, \metagol{} \cite{metagol}, and \ale{} \cite{aleph}:

\begin{description}
\item[\metagol{}] \metagol{} is one of the few systems that can learn recursive Prolog programs.
\metagol{} uses user-provided \emph{metarules} (program templates) to guide the search for a solution.
We use the approximate universal set of metarules described by  \citet{reduce}.
\item[\ale{}] \ale{} excels at learning many large non-recursive rules and should excel at the trains and IGGP tasks.
Although \ale{} can learn recursive programs, it struggles to do so.
\name{} and \ale{} use similar biases so the comparison can be considered reasonably fair.
\end{description}

\paragraph{Results.}
Tables \ref{tab:q1accs} and \ref{tab:q1times} shows accuracies and learning times respectively.

\begin{table}[ht!]
\footnotesize
\centering
\begin{tabular}{@{}l|cccc@{}}
\textbf{Task} & \textbf{\popper{}} & \textbf{\name{}} & \textbf{\ale{}} & \textbf{\metagol{}}\\
\midrule
\emph{trains1} & \textbf{100} $\pm$ 0 & \textbf{100} $\pm$ 0 & \textbf{100} $\pm$ 0 & 27 $\pm$ 0 \\
\emph{trains2} & 98 $\pm$ 0 & 98 $\pm$ 0 & \textbf{99} $\pm$ 0 & 19 $\pm$ 0 \\
\emph{trains3} & 99 $\pm$ 0 & 99 $\pm$ 0 & \textbf{100} $\pm$ 0 & 79 $\pm$ 0 \\
\emph{trains4} & \textbf{100} $\pm$ 0 & \textbf{100} $\pm$ 0 & \textbf{100} $\pm$ 0 & 32 $\pm$ 0 \\

\midrule
\emph{zendo1} & \textbf{99} $\pm$ 0 & \textbf{99} $\pm$ 0 & \textbf{99} $\pm$ 0 & 69 $\pm$ 7 \\
\emph{zendo2} & 96 $\pm$ 1 & 97 $\pm$ 1 & \textbf{100} $\pm$ 0 & 50 $\pm$ 0 \\
\emph{zendo3} & 93 $\pm$ 1 & 93 $\pm$ 2 & \textbf{98} $\pm$ 0 & 50 $\pm$ 0 \\
\emph{zendo4} & \textbf{97} $\pm$ 0 & \textbf{97} $\pm$ 0 & 96 $\pm$ 0 & 50 $\pm$ 0 \\
\midrule
\emph{imdb1} & \textbf{100} $\pm$ 0 & \textbf{100} $\pm$ 0 & \textbf{100} $\pm$ 0 & 16 $\pm$ 0 \\
\emph{imdb2} & \textbf{100} $\pm$ 0 & \textbf{100} $\pm$ 0 & 50 $\pm$ 0 & 50 $\pm$ 0 \\
\emph{imdb3} & \textbf{100} $\pm$ 0 & \textbf{100} $\pm$ 0 & 50 $\pm$ 0 & 50 $\pm$ 0 \\
\midrule
\emph{krk} & \textbf{99} $\pm$ 0 & \textbf{99} $\pm$ 0 & 98 $\pm$ 0 & 50 $\pm$ 0 \\
\midrule
\emph{rps} & \textbf{100} $\pm$ 0 & \textbf{100} $\pm$ 0 & 18 $\pm$ 0 & 18 $\pm$ 0 \\
\emph{centipede} & \textbf{100} $\pm$ 0 & \textbf{100} $\pm$ 0 & 25 $\pm$ 0 & 50 $\pm$ 0 \\
\emph{md} & \textbf{100} $\pm$ 0 & \textbf{100} $\pm$ 0 & 94 $\pm$ 0 & 11 $\pm$ 0 \\
\emph{buttons} & \textbf{100} $\pm$ 0 & \textbf{100} $\pm$ 0 & 73 $\pm$ 9 & 19 $\pm$ 0 \\
\emph{attrition} & \textbf{98} $\pm$ 0 & \textbf{98} $\pm$ 0 & 95 $\pm$ 0 & 2 $\pm$ 0 \\
\emph{coins} & \textbf{100} $\pm$ 0 & \textbf{100} $\pm$ 0 & 17 $\pm$ 0 & 17 $\pm$ 0 \\
\emph{buttons-goal} & 98 $\pm$ 1 & 99 $\pm$ 0 & \textbf{100} $\pm$ 0 & 50 $\pm$ 0 \\

\emph{coins-goal} & \textbf{100} $\pm$ 0 & \textbf{100} $\pm$ 0 & 93 $\pm$ 0 & 50 $\pm$ 0 \\

\midrule
\emph{dropk} & \textbf{100} $\pm$ 0 & \textbf{100} $\pm$ 0 & 54 $\pm$ 4 & 50 $\pm$ 0 \\
\emph{droplast} & \textbf{100} $\pm$ 0 & \textbf{100} $\pm$ 0 & 50 $\pm$ 0 & 50 $\pm$ 0 \\
\emph{evens} & \textbf{100} $\pm$ 0 & \textbf{100} $\pm$ 0 & 58 $\pm$ 3 & 50 $\pm$ 0 \\
\emph{finddup} & 98 $\pm$ 0 & \textbf{99} $\pm$ 0 & 50 $\pm$ 0 & 50 $\pm$ 0 \\
\emph{last} & \textbf{100} $\pm$ 0 & \textbf{100} $\pm$ 0 & 50 $\pm$ 0 & 60 $\pm$ 6 \\
\emph{len} & \textbf{100} $\pm$ 0 & \textbf{100} $\pm$ 0 & 50 $\pm$ 0 & 50 $\pm$ 0 \\
\emph{sorted} & \textbf{97} $\pm$ 2 & \textbf{97} $\pm$ 2 & 71 $\pm$ 3 & 50 $\pm$ 0 \\
\emph{sumlist} & 90 $\pm$ 6 & \textbf{100} $\pm$ 0 & 50 $\pm$ 0 & 65 $\pm$ 7 \\

\end{tabular}
\caption{
Predictive accuracies. 
We round accuracies to integer values. 
The error is standard deviation.
}
\label{tab:q1accs}
\end{table}

\begin{table}[ht!]
\footnotesize
\centering
\begin{tabular}{@{}l|cccc@{}}
\textbf{Task} & \textbf{\popper{}} & \textbf{\name{}} & \textbf{\ale{}} & \textbf{\metagol{}}\\
\midrule
\emph{trains1} & 5 $\pm$ 0.1 & 4 $\pm$ 0.1 & \textbf{2} $\pm$ 0.3 & \emph{timeout} \\
\emph{trains2} & 5 $\pm$ 0.2 & 4 $\pm$ 0.3 & \textbf{1} $\pm$ 0.1 & \emph{timeout} \\
\emph{trains3} & 27 $\pm$ 0.8 & 22 $\pm$ 0.6 & \textbf{4} $\pm$ 0.6 & \emph{timeout} \\
\emph{trains4} & 24 $\pm$ 0.8 & 20 $\pm$ 0.5 & \textbf{13} $\pm$ 1 & \emph{timeout} \\
\midrule
\emph{zendo1} & 8 $\pm$ 2 & 6 $\pm$ 1 & \textbf{0.6} $\pm$ 0.1 & 725 $\pm$ 193 \\
\emph{zendo2} & 32 $\pm$ 2 & 31 $\pm$ 2 & \textbf{2} $\pm$ 0.3 & \emph{timeout} \\
\emph{zendo3} & 33 $\pm$ 2 & 31 $\pm$ 1 & \textbf{3} $\pm$ 0.5 & \emph{timeout} \\
\emph{zendo4} & 24 $\pm$ 3 & 24 $\pm$ 3 & \textbf{2} $\pm$ 0.5 & \emph{timeout} \\
\midrule
\emph{imdb1} & \textbf{1} $\pm$ 0 & \textbf{1} $\pm$ 0 & 77 $\pm$ 20 & \emph{timeout} \\
\emph{imdb2} & \textbf{2} $\pm$ 0.1 & \textbf{2} $\pm$ 0 & \emph{timeout} & \emph{timeout} \\
\emph{imdb3} & 366 $\pm$ 23 & \textbf{287} $\pm$ 17 & \emph{timeout} & \emph{timeout} \\
\midrule
\emph{krk} & 48 $\pm$ 6 & 9 $\pm$ 0.6 & \textbf{0.9} $\pm$ 0.3 & 343 $\pm$ 29 \\
\midrule
\emph{rps} & 37 $\pm$ 1 & 6 $\pm$ 0.2 & \emph{timeout} & \textbf{0.1} $\pm$ 0 \\
\emph{centipede} & 47 $\pm$ 2 & 9 $\pm$ 0.2 & \textbf{0.3} $\pm$ 0 & 2 $\pm$ 0 \\
\emph{md} & 142 $\pm$ 7 & 13 $\pm$ 0.4 & \textbf{11} $\pm$ 0.6 & \emph{timeout} \\
\emph{buttons} & 686 $\pm$ 109 & \textbf{25} $\pm$ 1 & 1099 $\pm$ 28 & \emph{timeout} \\
\emph{attrition} & 410 $\pm$ 20 & 57 $\pm$ 2 & 684 $\pm$ 24 & \textbf{2} $\pm$ 0 \\
\emph{coins} & 496 $\pm$ 19 & 345 $\pm$ 18 & \emph{timeout} & \textbf{0.2} $\pm$ 0 \\
\emph{buttons-goal} & 11 $\pm$ 0.2 & 5 $\pm$ 0.1 & 35 $\pm$ 1 & \textbf{0.1} $\pm$ 0 \\
\emph{coins-goal} & 122 $\pm$ 6 & 76 $\pm$ 2 & 545 $\pm$ 17 & \textbf{0.1} $\pm$ 0 \\
\midrule
\emph{dropk} & 4 $\pm$ 0.3 & 3 $\pm$ 0.2 & 7 $\pm$ 1 & \textbf{0.1} $\pm$ 0 \\
\emph{droplast} & 41 $\pm$ 3 & \textbf{23} $\pm$ 2 & 404 $\pm$ 26 & \emph{timeout} \\
\emph{evens} & 33 $\pm$ 7 & 9 $\pm$ 1 & \textbf{2} $\pm$ 0.3 & 627 $\pm$ 190 \\
\emph{finddup} & 51 $\pm$ 8 & 32 $\pm$ 4 & \textbf{1} $\pm$ 0.2 & 1199 $\pm$ 0 \\
\emph{last} & 4 $\pm$ 0.4 & 3 $\pm$ 0.2 & \textbf{1} $\pm$ 0.2 & 960 $\pm$ 159 \\
\emph{len} & 31 $\pm$ 5 & 16 $\pm$ 2 & \textbf{1} $\pm$ 0.2 & \emph{timeout} \\
\emph{sorted} & 74 $\pm$ 5 & \textbf{23} $\pm$ 1 & 120 $\pm$ 119 & 1084 $\pm$ 115 \\
\emph{sumlist} & 554 $\pm$ 122 & 320 $\pm$ 40 & \textbf{0.3} $\pm$ 0 & 840 $\pm$ 183 \\

\end{tabular}
\caption{
Learning times.
We round times over one second to the nearest second.
The error is standard deviation.
The timeout is 20 minutes (1200s).
}
\label{tab:q1times}
\end{table}

\begin{figure*}
\centering
\footnotesize
\begin{lstlisting}[caption=trains2\label{trains2}]
east(A):-car(A,C),roof_open(C),load(C,B),triangle(B)
east(A):-car(A,C),car(A,B),roof_closed(B),two_wheels(C),roof_open(C).
\end{lstlisting}

\centering
\footnotesize
\begin{lstlisting}[caption=trains4\label{trains4}]
east(A):-has_car(A,D),has_load(D,B),has_load(D,C),rectangle(B),diamond(C).
east(A):-has_car(A,B),has_load(B,C),hexagon(C),roof_open(B),three_load(C).
east(A):-has_car(A,E),has_car(A,D),has_load(D,C),triangle(C),has_load(E,B),hexagon(B).
east(A):-has_car(A,C),roof_open(C),has_car(A,B),roof_flat(B),short(C),long(B).
\end{lstlisting}
\centering

\begin{lstlisting}[caption=zendo1\label{zendo1}]
zendo1(A):- piece(A,C),size(C,B),blue(C),small(B),contact(C,D),red(D).
\end{lstlisting}
\centering

\begin{lstlisting}[caption=zendo2\label{zendo2}]
zendo2(A):- piece(A,B),piece(A,D),piece(A,C),green(D),red(B),blue(C).
zendo2(A):- piece(A,D),piece(A,B),coord1(B,C),green(D),lhs(B),coord1(D,C).
\end{lstlisting}
\centering

\begin{lstlisting}[caption=zendo3\label{zendo3}]
zendo3(A):- piece(A,D),blue(D),coord1(D,B),piece(A,C),coord1(C,B),red(C).
zendo3(A):- piece(A,D),contact(D,C),rhs(D),size(C,B),large(B).
zendo3(A):- piece(A,B),upright(B),contact(B,D),blue(D),size(D,C),large(C).
\end{lstlisting}
\centering

\begin{lstlisting}[caption=zendo4\label{zendo4}]
zendo4(A):- piece(A,C),contact(C,B),strange(B),upright(C).
zendo4(A):- piece(A,D),contact(D,C),coord2(C,B),coord2(D,B).
zendo4(A):- piece(A,D),contact(D,C),size(C,B),red(D),medium(B).
zendo4(A):- piece(A,D),blue(D),lhs(D),piece(A,C),size(C,B),small(B).\end{lstlisting}
\centering

\begin{lstlisting}[caption=minimal decay\label{minimaldecay}]
next_value(A,B):-c_player(D),c_pressButton(C),c5(B),does(A,D,C).
next_value(A,B):-c_player(C),my_true_value(A,E),does(A,C,D),my_succ(B,E),c_noop(D).
\end{lstlisting}

\begin{lstlisting}[caption=buttons \label{buttons}]
next(A,B):-c_p(B),c_c(C),does(A,D,C),my_true(A,B),my_input(D,C).
next(A,B):-my_input(C,E),c_p(D),my_true(A,D),c_b(E),does(A,C,E),c_q(B).
next(A,B):-my_input(C,D),not_my_true(A,B),does(A,C,D),c_p(B),c_a(D).
next(A,B):-c_a(C),does(A,D,C),my_true(A,B),c_q(B),my_input(D,C).
next(A,B):-my_input(C,E),c_p(B),my_true(A,D),c_b(E),does(A,C,E),c_q(D).
next(A,B):-c_c(D),my_true(A,C),c_r(B),role(E),does(A,E,D),c_q(C).
next(A,B):-my_true(A,C),my_succ(C,B).
next(A,B):-my_input(C,D),does(A,C,D),my_true(A,B),c_r(B),c_b(D).
next(A,B):-my_input(C,D),does(A,C,D),my_true(A,B),c_r(B),c_a(D).
next(A,B):-my_true(A,E),c_c(C),does(A,D,C),c_q(B),c_r(E),my_input(D,C).
\end{lstlisting}
\begin{lstlisting}[caption=rps\label{rps}]
next_score(A,B,C):-does(A,B,E),different(G,B),my_true_score(A,B,F),beats(E,D),my_succ(F,C),does(A,G,D).
next_score(A,B,C):-different(G,B),beats(D,F),my_true_score(A,E,C),does(A,G,D),does(A,E,F).
next_score(A,B,C):-my_true_score(A,B,C),does(A,B,D),does(A,E,D),different(E,B).
\end{lstlisting}

\begin{lstlisting}[caption=coins\label{coins}]
next_cell(A,B,C):-does_jump(A,E,F,D),role(E),different(B,D),my_true_cell(A,B,C),different(F,B).
next_cell(A,B,C):-my_pos(E),role(D),c_zerocoins(C),does_jump(A,D,B,E).
next_cell(A,B,C):-role(D),does_jump(A,D,E,B),c_twocoins(C),different(B,E).
next_cell(A,B,C):-does_jump(A,F,E,D),role(F),my_succ(E,B),my_true_cell(A,B,C),different(E,D).
\end{lstlisting}

\centering
\begin{lstlisting}[caption=sorted]
f(A):-tail(A,B),empty(B).
f(A):-tail(A,D),head(A,B),head(D,C),geq(C,B),f(D).
\end{lstlisting}

\caption{Example solutions.}
\label{fig:sols}
\end{figure*}

\end{appendices}

\bibliography{manuscript.bib}

\end{document}